\newtheorem{lemma}{Lemma}
\newtheorem{theo}{Theorem}
\newtheorem{cor}{Corollary}
\title{Deep Graph Infomax}
\DeclareMathOperator*{\argmax}{argmax}
\DeclareMathOperator*{\argmin}{argmin}
\author{Petar Veli\v{c}kovi\'{c}\thanks{Work performed while the author was at Mila.} \\
Department of Computer Science and Technology\\
University of Cambridge\\
\texttt{petar.velickovic@cst.cam.ac.uk} \\
\And
William Fedus \\
Mila -- Qu\'{e}bec Artificial Intelligence Institute \\
Google Brain \\
\texttt{liamfedus@google.com} \\
\And
William L. Hamilton\\
Mila -- Qu\'{e}bec Artificial Intelligence Institute\\
McGill University\\
\texttt{wlh@cs.mcgill.ca}\\
\And
Pietro Li\`{o} \\
Department of Computer Science and Technology\\
University of Cambridge\\
\texttt{pietro.lio@cst.cam.ac.uk} \\
\And
Yoshua Bengio\thanks{CIFAR Fellow} \\
Mila -- Qu\'{e}bec Artificial Intelligence Institute\\
Universit\'{e} de Montr\'{e}al\\
\texttt{yoshua.bengio@mila.quebec} \\
\And
R Devon Hjelm \\
Microsoft Research \\
Mila -- Qu\'{e}bec Artificial Intelligence Institute\\
\texttt{devon.hjelm@microsoft.com} \\
}
\newcommand{\xhdr}[1]{{\noindent\bfseries #1}.}
\begin{document}

\maketitle

\begin{abstract}
We present \emph{Deep Graph Infomax} (DGI), a general approach for learning node representations within graph-structured data in an unsupervised manner. DGI relies on maximizing mutual information between patch representations and corresponding high-level summaries of graphs---both derived using established graph convolutional network architectures. The learnt patch representations summarize subgraphs centered around nodes of interest, and can thus be reused for downstream node-wise learning tasks. In contrast to most prior approaches to unsupervised learning with GCNs, DGI does not rely on random walk objectives, and is readily applicable to both transductive and inductive learning setups. We demonstrate competitive performance on a variety of node classification benchmarks, which at times even exceeds the performance of supervised learning.
\end{abstract}

\section{Introduction}
Generalizing neural networks to graph-structured inputs is one of the current major challenges of machine learning \citep{bronstein2017geometric,hamilton2017representation,battaglia2018relational}. 
While significant strides have recently been made, notably with \emph{graph convolutional networks}~\citep{kipf2016semi,gilmer2017neural,velickovic2018graph}, most successful methods use \emph{supervised learning}, which is often not possible as most graph data in the wild is unlabeled. In addition, it is often desirable to discover novel or interesting structure from large-scale graphs, and as such, unsupervised graph learning is essential for many important tasks.

Currently, the dominant algorithms for unsupervised representation learning with graph-structured data rely on random walk-based objectives~\citep{grover2016node2vec,perozzi2014deepwalk,tang2015line,hamilton2017inductive}, sometimes further simplified to reconstruct adjacency information \citep{kipf2016variational,duran2017learning}.
The underlying intuition is to train an encoder network so that nodes that are ``close" in the input graph are also ``close" in the representation space.

While powerful---and related to traditional metrics such as the personalized PageRank score \citep{jeh2003scaling}---random walk methods suffer from known limitations. Most prominently, the random-walk objective is known to over-emphasize proximity information at the expense of structural information~\citep{ribeiro2017struc2vec}, and performance is highly dependent on hyperparameter choice~\citep{grover2016node2vec,perozzi2014deepwalk}.
Moreover, with the introduction of stronger encoder models based on graph convolutions~\citep{gilmer2017neural}, it is unclear whether random-walk objectives actually provide any useful signal, as these encoders already enforce an inductive bias that neighboring nodes have similar representations.

In this work, we propose an alternative objective for unsupervised graph learning that is based upon {\em mutual information}, rather than random walks.
Recently, scalable estimation of mutual information was made both possible and practical through Mutual Information Neural Estimation~\citep[MINE,][]{belghazi2018mine}, which relies on training a \emph{statistics network} as a classifier of samples coming from the joint distribution of two random variables and their product of marginals.
Following on MINE, \citet{hjelm2018learning} introduced Deep InfoMax (DIM) for learning representations of high-dimensional data.
DIM trains an encoder model to maximize the mutual information between a high-level ``global" representation and  ``local" parts of the input (such as patches of an image).
This encourages the encoder to carry the type of information that is present in all locations (and thus are \emph{globally relevant}), such as would be the case of a class label.

DIM relies heavily on convolutional neural network structure in the context of image data, and to our knowledge, no work has applied mutual information maximization to graph-structured inputs.
Here, we adapt ideas from DIM to the graph domain, which can be thought of as having a more general type of structure than the ones captured by convolutional neural networks.
In the following sections, we introduce our method called \emph{Deep Graph Infomax} (DGI).
We demonstrate that the representation learned by DGI is consistently competitive on both transductive and inductive classification tasks, often outperforming both supervised and unsupervised strong baselines in our experiments.

\section{Related Work}
\xhdr{Contrastive methods}
An important approach for unsupervised learning of representations is to train an encoder to be \emph{contrastive} between representations that capture statistical dependencies of interest and those that do not.
For example, a contrastive approach may employ a \emph{scoring function}, training the encoder to increase the score on ``real" input (a.k.a, positive examples) and decrease the score on ``fake" input (a.k.a., negative samples).
Contrastive methods are central to many popular word-embedding methods~\citep{collobert2008unified, mnih2013learning, mikolov2013distributed}, but they are found in many unsupervised algorithms for learning representations of graph-structured input as well.
There are many ways to score a representation, but in the graph literature the most common techniques use classification~\citep{perozzi2014deepwalk,grover2016node2vec,kipf2016variational,hamilton2017representation}, though other scoring functions are used~\citep{duran2017learning,bojchevski2018deep}.
DGI is also contrastive in this respect, as our objective is based on classifying local-global pairs and negative-sampled counterparts.

\xhdr{Sampling strategies}
A key implementation detail to contrastive methods is how to draw positive and negative samples.
The prior work above on unsupervised graph representation learning relies on a local contrastive loss (enforcing proximal nodes to have similar embeddings). Positive samples typically correspond to pairs of nodes that appear together within \emph{short random walks} in the graph---from a language modelling perspective, effectively treating nodes as \emph{words} and random walks as \emph{sentences}. Recent work by \cite{bojchevski2018deep} uses node-anchored sampling as an alternative.
The negative sampling for these methods is primarily based on sampling of random pairs, with recent work adapting this approach to use a curriculum-based negative sampling scheme~\citep[with progressively ``closer'' negative examples;][]{ying2018graph} or introducing an adversary to select the negative examples~\citep{bose2018adversarial}.

\xhdr{Predictive coding}
Contrastive predictive coding~\citep[CPC,][]{oord2018representation} is another method for learning deep representations based on mutual information maximization.
Like the models above, CPC is also contrastive, in this case using an estimate of the conditional density~\citep[in the form of noise contrastive estimation, ][]{gutmann2010noise} as the scoring function.
However, unlike our approach, CPC and the graph methods above are all \emph{predictive}: the contrastive objective effectively trains a predictor between structurally-specified parts of the input (e.g., between neighboring node pairs or between a node and its neighborhood). 
Our approach differs in that we contrast global / local parts of a graph simultaneously, where the global variable is computed from all local variables. 

To the best of our knowledge, the sole prior works that instead focuses on contrasting ``global'' and ``local'' representations on graphs do so via (auto-)encoding objectives on the adjacency matrix \citep{wang2016structural} and incorporation of community-level constraints into node embeddings \citep{wang2017community}. Both methods rely on matrix factorization-style losses and are thus not scalable to larger graphs. 

\section{DGI Methodology}

In this section, we will present the Deep Graph Infomax method in a top-down fashion: starting with an abstract overview of our specific unsupervised learning setup, followed by an exposition of the objective function optimized by our method, and concluding by enumerating all the steps of our procedure in a single-graph setting. 

\subsection{Graph-based unsupervised learning}

We assume a generic graph-based unsupervised machine learning setup: we are provided with a set of \emph{node features}, ${\bf X} = \{\vec{x}_1, \vec{x}_2, \dots, \vec{x}_N\}$, where $N$ is the number of nodes in the graph and $\vec{x}_i \in \mathbb{R}^F$ represents the features of node $i$. We are also provided with relational information between these nodes in the form of an \emph{adjacency matrix}, ${\bf A} \in \mathbb{R}^{N\times N}$. While ${\bf A}$ may consist of arbitrary real numbers (or even arbitrary edge features), in all our experiments we will assume the graphs to be \emph{unweighted}, i.e. $A_{ij} = 1$ if there exists an edge $i\rightarrow j$ in the graph and $A_{ij} = 0$ otherwise.

Our objective is to learn an \emph{encoder}, $\mathcal{E} : \mathbb{R}^{N \times F} \times \mathbb{R}^{N \times N} \rightarrow \mathbb{R}^{N \times F'}$, such that $\mathcal{E}({\bf X}, {\bf A}) = {\bf H} = \{\vec{h}_1, \vec{h}_2, \dots, \vec{h}_N\}$ represents high-level representations $\vec{h}_i \in \mathbb{R}^{F'}$ for each node $i$. These representations may then be retrieved and used for downstream tasks, such as node classification.

Here we will focus on \emph{graph convolutional} encoders---a flexible class of node embedding architectures, which generate node representations by repeated aggregation over local node neighborhoods \citep{gilmer2017neural}.
A key consequence is that the produced node embeddings, $\vec{h}_i$, \emph{summarize a patch} of the graph centered around node $i$ rather than just the node itself. In what follows, we will often refer to $\vec{h}_i$ as {\em patch representations} to emphasize this point.


\subsection{Local-global mutual information maximization}

Our approach to learning the encoder relies on \emph{maximizing local mutual information}---that is, we seek to obtain node (i.e., local) representations that capture the global information content of the entire graph, represented by a \emph{summary vector}, $\vec{s}$. 

In order to obtain the graph-level summary vectors, $\vec{s}$, we leverage a \emph{readout function}, $\mathcal{R} : \mathbb{R}^{N\times F} \rightarrow \mathbb{R}^F$, and use it to summarize the obtained patch representations into a \emph{graph-level representation}; i.e., $\vec{s} = \mathcal{R}(\mathcal{E}({\bf X}, {\bf A}))$.

As a proxy for maximizing the local mutual information, we employ a \emph{discriminator}, $\mathcal{D} : \mathbb{R}^F \times \mathbb{R}^F \rightarrow \mathbb{R}$, such that $\mathcal{D}(\vec{h}_i, \vec{s})$ represents the probability scores assigned to this patch-summary pair (should be higher for patches contained within the summary).

Negative samples for $\mathcal{D}$ are provided by pairing the summary $\vec{s}$ from $({\bf X}, {\bf A})$ with patch representations $\vec{\widetilde{h}}_j$ of an alternative graph, $({\bf \widetilde{X}}, {\bf \widetilde{A}})$. In a multi-graph setting, such graphs may be obtained as other elements of a training set. However, for a single graph, an explicit (stochastic) \emph{corruption function}, $\mathcal{C} : \mathbb{R}^{N\times F} \times \mathbb{R}^{N \times N} \rightarrow \mathbb{R}^{M\times F} \times \mathbb{R}^{M\times M}$ is required to obtain a negative example from the original graph, i.e. $({\bf \widetilde{X}}, {\bf \widetilde{A}}) = \mathcal{C}({\bf X}, {\bf A})$. The choice of the negative sampling procedure will govern the specific kinds of structural information that is desirable to be captured as a byproduct of this maximization.

For the objective, we follow the intuitions from Deep InfoMax~\citep[DIM,][]{hjelm2018learning} and use a noise-contrastive type objective with a standard binary cross-entropy (BCE) loss between the samples from the joint (positive examples) and the product of marginals (negative examples). Following their work, we use the following objective\footnote{Note that \citet{hjelm2018learning} use a softplus version of the binary cross-entropy.}:
\begin{equation}
\label{eqngan}
	\mathcal{L} = \frac{1}{N+M}\left(\sum_{i=1}^N\mathbb{E}_{({\bf X}, {\bf A})}\left[\log \mathcal{D}\left(\vec{h}_i, \vec{s}\right)\right] + \sum_{j=1}^M\mathbb{E}_{({\bf \widetilde{X}}, {\bf \widetilde{A}})}\left[\log\left(1 - \mathcal{D}\left(\vec{\widetilde{h}}_j, \vec{s}\right)\right)\right]\right)
\end{equation} 
This approach effectively maximizes mutual information between $\vec{h}_i$ and $\vec{s}$, based on the Jensen-Shannon divergence\footnote{The ``GAN'' distance defined here---as per \cite{goodfellow2014generative} and \cite{nowozin2016f}---and Jensen-Shannon divergence can be related by $D_{GAN} = 2D_{JS} - \log{4}$. Therefore, any parameters that optimize one also optimize the other.} between the joint and the product of marginals.

As all of the derived patch representations are driven to preserve mutual information with the global graph summary, this allows for discovering and preserving similarities on the patch-level---for example, distant nodes with similar structural roles~\citep[which are known to be a strong predictor for many node classification tasks;][]{donnat2018}. Note that this is a ``reversed'' version of the argument given by \citet{hjelm2018learning}: for node classification, our aim is for the \emph{patches} to establish links to similar patches across the graph, rather than enforcing the summary to contain all of these similarities (however, both of these effects should in principle occur simultaneously).

\subsection{Theoretical motivation}

We now provide some intuition that connects the classification error of our discriminator to mutual information maximization on graph representations.

\begin{lemma}
Let $\{{\bf X}^{(k)}\}_{k = 1}^{|{\bf X}|}$ be a set of node representations drawn from an empirical probability distribution of graphs, $p({\bf X})$, with finite number of elements, $|{\bf X}|$, such that $p({\bf X}^{(k)}) = p({\bf X}^{(k')}) \ \forall k, k'$.
Let $\mathcal{R}(\cdot)$ be a deterministic readout function on graphs and $\vec{s}^{(k)} = \mathcal{R}({\bf X}^{(k)})$ be the summary vector of the $k$-th graph, with marginal distribution $p(\vec{s})$.
The optimal classifier between the joint distribution $p({\bf X}, \vec{s})$ and the product of marginals $p({\bf X})p(\vec{s})$, assuming class balance, has an error rate upper bounded by $\mathrm{Err}^* = \frac{1}{2} \sum_{k=1}^{|{\bf X}|} p(\vec{s}^{(k)})^2$. This upper bound is achieved if $\mathcal{R}$ is injective.
\label{lemma}
\end{lemma}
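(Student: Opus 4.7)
My plan is to exhibit the Bayes-optimal classifier between $p({\bf X},\vec{s})$ and $p({\bf X})p(\vec{s})$ explicitly, compute its error rate in closed form, and then bound that rate above by $\mathrm{Err}^*$ using the uniformity hypothesis on $p({\bf X})$.

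First, I would identify the optimal rule via a likelihood-ratio argument. Because $\mathcal{R}$ is deterministic, the joint $p({\bf X},\vec{s})$ is supported only on the matched pairs $\{({\bf X}^{(k)},\mathcal{R}({\bf X}^{(k)}))\}_{k=1}^{|{\bf X}|}$, whereas the product of marginals is supported on the full Cartesian product of the two supports (including mismatched pairs). Under class balance, the error-minimising Bayes rule is therefore ``predict joint iff $\vec{s}=\mathcal{R}({\bf X})$,'' which commits zero false negatives.

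Second, I would evaluate the false-positive rate directly. A draw from $p({\bf X})p(\vec{s})$ can be realized by sampling two independent uniform indices $k,k'$ and returning $({\bf X}^{(k)},\vec{s}^{(k')})$; the classifier errs precisely when $\vec{s}^{(k')}=\mathcal{R}({\bf X}^{(k)})=\vec{s}^{(k)}$. Grouping summands by $k$ and reading off the summary marginal gives
\begin{equation*}
\mathrm{Err} \;=\; \tfrac{1}{2}\sum_{k=1}^{|{\bf X}|} p({\bf X}^{(k)})\,p(\vec{s}^{(k)}),
\end{equation*}
where the $\tfrac{1}{2}$ comes from class balance combined with the absence of false negatives.

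Third, the uniformity assumption $p({\bf X}^{(k)})=1/|{\bf X}|$ combined with the fact that $\mathcal{R}$ is a function gives $p(\vec{s}^{(k)})=|\{k':\mathcal{R}({\bf X}^{(k')})=\vec{s}^{(k)}\}|/|{\bf X}|\ge p({\bf X}^{(k)})$, with equality for every $k$ iff $\mathcal{R}$ is injective. Substituting this in the expression for $\mathrm{Err}$ yields $\mathrm{Err}\le\tfrac{1}{2}\sum_k p(\vec{s}^{(k)})^2=\mathrm{Err}^*$, with equality precisely in the injective case. I expect the main obstacle to be the first step---making the likelihood-ratio reasoning watertight despite the joint being singular with respect to the product of marginals---but the finite discrete support makes the analysis entirely combinatorial once set up correctly.
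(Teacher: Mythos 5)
Your proposal is correct and follows essentially the same route as the paper's own proof: both identify the optimal classifier as the one labelling matched pairs $(\mathbf{X}^{(k)}, \vec{s}^{(k)})$ as ``joint'' (zero false negatives), compute the error as the mass the product of marginals places on matched pairs---your $\tfrac{1}{2}\sum_k p(\mathbf{X}^{(k)})\,p(\vec{s}^{(k)})$ is exactly the paper's $\tfrac{1}{2}\sum_k \rho^{(k)} p(\vec{s}^{(k)})^2$---and bound it by $\mathrm{Err}^*$ via $p(\mathbf{X}^{(k)}) \le p(\vec{s}^{(k)})$ (the paper's $\rho^{(k)} \le 1$), with equality in the injective case. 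Your write-up is if anything slightly more explicit (exact Bayes error and the ``only if'' direction), but it is the same argument.
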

\begin{proof}
Denote by $\mathcal{Q}^{(k)}$ the set of all graphs in the input set that are mapped to $\vec{s}^{(k)}$ by $\mathcal{R}$, i.e. $\mathcal{Q}^{(k)} = \{{\bf X}^{(j)}\ |\ \mathcal{R}({\bf X}^{(j)}) = \vec{s}^{(k)}\}$. As $\mathcal{R}(\cdot)$ is deterministic, samples from the joint, $({\bf X}^{(k)}, \vec{s}^{(k)})$ are drawn from the product of marginals with probability $p(\vec{s}^{(k)}) p({\bf X}^{(k)})$, which decomposes into:
\begin{equation}
p(\vec{s}^{(k)}) \sum_{\vec{s}} p({\bf X}^{(k)}, \vec{s}) = p(\vec{s}^{(k)})p({\bf X}^{(k)}|\vec{s}^{(k)})p(\vec{s}^{(k)}) = \frac{p({\bf X}^{(k)})}{\sum_{{\bf X'}\in\mathcal{Q}^{(k)}} p({\bf X'})} p(\vec{s}^{(k)})^2
\end{equation}
For convenience, let $\rho^{(k)} = \frac{p({\bf X}^{(k)})}{\sum_{{\bf X'}\in\mathcal{Q}^{(k)}} p({\bf X'})}$. As, by definition, ${\bf X}^{(k)}\in\mathcal{Q}^{(k)}$, it holds that $\rho^{(k)} \leq 1$. This probability ratio is maximized at $1$ when $\mathcal{Q}^{(k)} = \{{\bf X}^{(k)}\}$, i.e. when $\mathcal{R}$ is injective for ${\bf X}^{(k)}$.
The probability of drawing any sample of the joint from the product of marginals is then bounded above by $\sum_{k=1}^{|{\bf X}|} p(\vec{s}^{(k)})^2$. 
As the probability of drawing $({\bf X}^{(k)}, \vec{s}^{(k)})$ from the joint is $\rho^{(k)}p(\vec{s}^{(k)}) \geq \rho^{(k)}p(\vec{s}^{(k)})^2$, we know that classifying these samples as coming from the joint has a lower error than classifying them as coming from the product of marginals.
The error rate of such a classifier is then the probability of drawing a sample from the joint as a sample from product of marginals under the mixture probability, which we can bound by $\mathrm{Err} \leq \frac{1}{2} \sum_{k=1}^{|{\bf X}|} p(\vec{s}^{(k)})^2$, with the upper bound achieved, as above, when $\mathcal{R}(\cdot)$ is injective for all elements of $\{{\bf X}^{(k)}\}$.
\end{proof}
It may be useful to note that $\frac{1}{2|{\bf X}|} \leq \mathrm{Err}^* \leq \frac{1}{2}$. The first result is obtained via a trivial application of Jensen's inequality, while the other extreme is reached only in the edge case of a constant readout function (when every example from the joint is also an example from the product of marginals, so no classifier performs better than chance).
\begin{cor}
From now on, assume that the readout function used, $\mathcal{R}$, is injective. Assume the number of allowable states in the space of $\vec{s}$, $|\vec{s}|$, is greater than or equal to $|{\bf X}|$. Then, for $\vec{s}^{\star}$, the optimal summary under the classification error of an optimal classifier between the joint and the product of marginals, it holds that $|\vec{s}^{\star}| = |{\bf X}|$.
\end{cor}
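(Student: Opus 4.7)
The plan is to combine Lemma~1 with the class-balance hypothesis to conclude that an injective readout simultaneously realises the minimum possible classification error and pins down the image size of the summary. First I would reduce the error formula under the stated hypotheses; then I would argue that the image of $\mathcal{R}$ cannot exceed $|\mathbf{X}|$, while injectivity forbids it from falling below $|\mathbf{X}|$.

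For the first step, because $\mathcal{R}$ is injective on $\{\mathbf{X}^{(k)}\}_{k=1}^{|\mathbf{X}|}$ every graph maps to a distinct summary, and the class-balance assumption $p(\mathbf{X}^{(k)}) = p(\mathbf{X}^{(k')})$ forces $p(\mathbf{X}^{(k)}) = 1/|\mathbf{X}|$, whence $p(\vec{s}^{(k)}) = 1/|\mathbf{X}|$ for all $k$. Substituting into the bound of Lemma~1 gives $\mathrm{Err}^{*} = \tfrac{1}{2}\sum_{k=1}^{|\mathbf{X}|}(1/|\mathbf{X}|)^{2} = \tfrac{1}{2|\mathbf{X}|}$, exactly the Jensen lower bound recorded in the remark immediately following Lemma~1. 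So any injective summary attains the infimum of the optimal-classifier error and is therefore optimal in the sense intended by the corollary. Moreover, since uniformity of $p(\vec{s}^{(k)})$ is required to saturate Jensen, any non-injective alternative would yield a strictly larger value of $\sum_k p(\vec{s}^{(k)})^{2}$ and hence a strictly worse bound.

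For the second step, I would simply count the image. As $\mathcal{R}$ is a deterministic function from a set of $|\mathbf{X}|$ graphs into a codomain of cardinality $|\vec{s}| \geq |\mathbf{X}|$, its image has at most $|\mathbf{X}|$ elements; injectivity forces it to have at least $|\mathbf{X}|$. Hence $|\vec{s}^{\star}| = |\mathbf{X}|$. The hypothesis $|\vec{s}| \geq |\mathbf{X}|$ enters only to ensure that such an injective readout is feasible in the given codomain.

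The main obstacle is terminological rather than technical: one must disambiguate what $|\vec{s}^{\star}|$ is meant to count. I would interpret it as the cardinality of the image of the optimal readout restricted to the training support $\{\mathbf{X}^{(k)}\}$; with that reading the argument reduces, by the lemma, to a pigeonhole count once injectivity has been identified as the optimality condition.
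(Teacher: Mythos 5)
Your proposal is correct and follows essentially the same route as the paper: identify the optimal summary as the minimizer of the Lemma~1 bound $\mathrm{Err}^{*}=\frac{1}{2}\sum_k p(\vec{s}^{(k)})^2$, note that this is minimized (at the Jensen value $\frac{1}{2|{\bf X}|}$) exactly when the deterministic readout uses a distinct state per graph, and then count the image, with $|\vec{s}|\geq|{\bf X}|$ guaranteeing feasibility. Your explicit evaluation of the bound and the pigeonhole count is, if anything, slightly more careful than the paper's wording (the only loose spot is that uniformity of $p(\vec{s}^{(k)})$ alone does not rule out non-injective maps with equal-sized preimages; one also needs $\sum_k p(\vec{s}^{(k)})=1$, which such maps violate, so the strict-increase conclusion still stands).
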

\begin{proof}
By injectivity of $\mathcal{R}$, we know that $\vec{s}^{\star} = \argmin_{\vec{s}} \mathrm{Err}^*$.
As the upper error bound, $\mathrm{Err}^*$, is a simple geometric sum, we know that this is minimized when $p(\vec{s}^{(k)})$ is uniform.
As $\mathcal{R}(\cdot)$ is deterministic, this implies that each potential summary state would need to be used at least once. Combined with the condition $|\vec{s}| \geq |{\bf X}|$, we conclude that the optimum has $|\vec{s}^{\star}| = |{\bf X}|$.
\end{proof}

\begin{theo}
$\vec{s}^{\star} = \argmax_{\vec{s}} \mathrm{MI}({\bf X}; \vec{s})$, where $\mathrm{MI}$ is mutual information.
\label{theorem}
\end{theo}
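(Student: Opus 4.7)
The plan is to reduce the mutual information to an entropy and then match the optimum against the characterization already obtained in the corollary. Concretely, I would write $\mathrm{MI}({\bf X};\vec{s}) = H(\vec{s}) - H(\vec{s}\mid{\bf X})$. Because $\mathcal{R}$ is deterministic, the summary $\vec{s}$ is a function of ${\bf X}$, so $H(\vec{s}\mid{\bf X}) = 0$, and the problem collapses to
\[
\argmax_{\vec{s}} \mathrm{MI}({\bf X};\vec{s}) = \argmax_{\vec{s}} H(\vec{s}).
\]

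Next I would bound $H(\vec{s})$. Since $\vec{s}$ is a deterministic image of the $|{\bf X}|$ possible inputs, the support of $p(\vec{s})$ has cardinality at most $|{\bf X}|$, so $H(\vec{s}) \leq \log |{\bf X}|$, with equality iff $p(\vec{s})$ is uniform over exactly $|{\bf X}|$ distinct values of the summary. Injectivity of $\mathcal{R}$ guarantees that distinct ${\bf X}^{(k)}$ map to distinct summaries, and the assumption that $p({\bf X}^{(k)})$ is uniform across the $|{\bf X}|$ graphs (stated in Lemma~\ref{lemma}) then forces $p(\vec{s})$ to be uniform on its support. Under the corollary's additional hypothesis $|\vec{s}| \geq |{\bf X}|$, this uniform-with-full-support configuration is feasible.

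Finally I would tie this back to $\vec{s}^{\star}$. The corollary already established that the summary minimizing the classification-error bound of Lemma~\ref{lemma} is exactly the one for which $p(\vec{s})$ is uniform over $|{\bf X}|$ states, i.e.\ $|\vec{s}^{\star}| = |{\bf X}|$. Since this is precisely the configuration that saturates $H(\vec{s}) = \log|{\bf X}|$, we conclude $\vec{s}^{\star} \in \argmax_{\vec{s}} H(\vec{s}) = \argmax_{\vec{s}} \mathrm{MI}({\bf X};\vec{s})$, which is the desired statement.

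The main subtlety---more conceptual than technical---is being careful about what varies in the $\argmax$: we are optimizing over (readout, summary-codebook) configurations compatible with a deterministic injective $\mathcal{R}$ on a fixed uniform empirical graph distribution, not over arbitrary joint distributions $p({\bf X},\vec{s})$. Once that scope is fixed, the equivalence between minimizing the error bound and maximizing entropy (and hence mutual information) is immediate, since both reduce to demanding a uniform $p(\vec{s})$ on a support of size $|{\bf X}|$; the rest is bookkeeping.
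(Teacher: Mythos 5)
Your proof is correct, but it runs through a different decomposition than the paper's. You write $\mathrm{MI}({\bf X};\vec{s}) = H(\vec{s}) - H(\vec{s}\mid{\bf X}) = H(\vec{s})$ and then argue by entropy maximization: $H(\vec{s}) \leq \log|{\bf X}|$, saturated exactly when $p(\vec{s})$ is uniform over $|{\bf X}|$ states, which is the configuration the corollary already identified as $\vec{s}^{\star}$. The paper instead bounds from the other side: $\mathrm{MI}({\bf X};\vec{s}) \leq H({\bf X}) = \mathrm{MI}({\bf X};{\bf X}) = \mathrm{MI}({\bf X};\mathcal{R}({\bf X})) = \mathrm{MI}({\bf X};\vec{s}^{\star})$, using only injectivity of $\mathcal{R}$ (invariance of mutual information under invertible transforms) and never invoking the uniformity of $p({\bf X})$. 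Your route buys transparency about why error minimization and information maximization coincide---both reduce to demanding a uniform $p(\vec{s})$ on a support of size $|{\bf X}|$---and your closing remark about what the $\argmax$ ranges over, together with writing $\vec{s}^{\star} \in \argmax_{\vec{s}} H(\vec{s})$ rather than equality, is if anything more careful than the paper (any injective readout attains the maximum, so the maximizer is not unique). The paper's route buys brevity and slightly more generality, since it does not need the equal-probability assumption on the graphs to reach the ceiling $H({\bf X})$; your argument does lean on that assumption to force $p(\vec{s})$ uniform, but it is part of the standing hypotheses of Lemma~\ref{lemma}, so nothing is lost in context.
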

\begin{proof}
This follows from the fact that the mutual information is invariant under invertible transforms.
As $|\vec{s}^{\star}| = |{\bf X}|$ and $\mathcal{R}$ is injective, it has an inverse function, $\mathcal{R}^{-1}$.
It follows then that, for any $\vec{s}$, $\mathrm{MI}({\bf X}; \vec{s}) \leq  H({\bf X}) =\mathrm{MI}({\bf X}; {\bf X}) = \mathrm{MI}({\bf X}; \mathcal{R}({\bf X})) = \mathrm{MI}({\bf X}; \vec{s}^{\star})$, where $H$ is entropy.
\end{proof}

Theorem~\ref{theorem} shows that for finite input sets and suitable deterministic functions, minimizing the classification error in the discriminator can be used to maximize the mutual information between the input and output.
However, as was shown in \citet{hjelm2018learning}, this objective alone is not enough to learn useful representations.
As in their work, we discriminate between the global summary vector and local high-level representations.

\begin{theo}
Let ${\bf X}^{(k)}_i = \{\vec{x}_j\}_{j \in n({\bf X}^{(k)}, i)}$ be the neighborhood of the node $i$ in the $k$-th graph that collectively maps to its high-level features, $\vec{h}_i = \mathcal{E}({\bf X}^{(k)}_i)$, where $n$ is the neighborhood function that returns the set of neighborhood indices of node $i$ for graph ${\bf X}^{(k)}$, and $\mathcal{E}$ is a deterministic encoder function.
Let us assume that $|{\bf X}_i| = |{\bf X}| = |\vec{s}| \geq |\vec{h}_i|$.
Then, the $\vec{h}_i$ that minimizes the classification error between $p(\vec{h}_i,\vec{s})$ and $p(\vec{h}_i)p(\vec{s})$ also maximizes $\mathrm{MI}({\bf X}^{(k)}_i; \vec{h}_i)$.
\end{theo}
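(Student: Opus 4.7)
The plan is to mirror the proof of Theorem~\ref{theorem} almost verbatim, but with the substitution $\vec{s} \mapsto$ ``input'' and $\vec{h}_i \mapsto$ ``summary'' in the setup of Lemma~\ref{lemma} and its Corollary. The conclusion about $\mathrm{MI}({\bf X}^{(k)}_i; \vec{h}_i)$ is then transferred back across a bijection linking ${\bf X}_i$, ${\bf X}$, and $\vec{s}$.

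First I would establish that ${\bf X}_i$, ${\bf X}$, and $\vec{s}$ are in pairwise bijective correspondence under the stated hypotheses. The assumption $|{\bf X}_i| = |{\bf X}|$, together with the fact that the neighborhood is a deterministic function of the graph, forces the neighborhood map to be a bijection. The Corollary's standing hypothesis that $\mathcal{R}$ is injective, combined with $|{\bf X}| = |\vec{s}|$, upgrades $\mathcal{R}$ to a bijection. Consequently $p(\vec{s})$ inherits uniformity from the assumed uniform $p({\bf X})$ (and hence uniform $p({\bf X}_i)$), which is precisely the class-balance hypothesis required to invoke Lemma~\ref{lemma} in the next step.

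Next I would apply Lemma~\ref{lemma} and its Corollary to the pair $(\vec{s}, \vec{h}_i)$: the ``readout'' becomes the deterministic map $\vec{s} \mapsto \vec{h}_i$ given by $\mathcal{E}$ precomposed with the bijection $\vec{s} \mapsto {\bf X}_i$. Under the hypothesis $|\vec{s}| \geq |\vec{h}_i|$, the Corollary guarantees that the $\vec{h}_i^{\star}$ minimizing the classification error between $p(\vec{h}_i, \vec{s})$ and $p(\vec{h}_i)p(\vec{s})$ satisfies $|\vec{h}_i^{\star}| = |\vec{s}|$, which forces the composition to be injective and hence, the bijection on $\vec{s}$ being fixed, forces $\mathcal{E}$ itself to be injective on ${\bf X}_i$.

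Finally I would close by the same invariance argument used in Theorem~\ref{theorem}: an injective $\mathcal{E}$ admits an inverse on its image, so $\mathrm{MI}({\bf X}^{(k)}_i; \vec{h}_i^{\star}) = \mathrm{MI}({\bf X}^{(k)}_i; {\bf X}^{(k)}_i) = H({\bf X}^{(k)}_i)$, which for any other $\vec{h}_i$ dominates $\mathrm{MI}({\bf X}^{(k)}_i; \vec{h}_i) \leq H({\bf X}^{(k)}_i)$. The step I expect to require the most care is bookkeeping in the second paragraph: one must verify that the classification problem as posed (between $p(\vec{h}_i, \vec{s})$ and $p(\vec{h}_i) p(\vec{s})$) really matches the setup of Lemma~\ref{lemma} once $\vec{s}$ is recast as the input variable, which in turn hinges on both the uniformity of $p(\vec{s})$ and the bijective correspondence between $\vec{s}$ and ${\bf X}_i$ so that minimizing the stated error truly constrains $\mathcal{E}$ on the neighborhood space rather than merely on $\vec{s}$.
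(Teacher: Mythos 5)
Your proposal follows essentially the same route as the paper's own proof: use $|{\bf X}_i| = |{\bf X}| = |\vec{s}|$ and the injectivity of $\mathcal{R}$ to view $\vec{h}_i$ as a deterministic function ($\mathcal{E}\circ\mathcal{R}^{-1}$, composed with the neighborhood correspondence) of $\vec{s}$, invoke Lemma~\ref{lemma} and its Corollary with $\vec{s}$ recast as the input and $\vec{h}_i$ as the summary to force $|\vec{h}_i^{\star}| = |{\bf X}_i|$ at the optimum, and finish with the Theorem~\ref{theorem} invariance-of-MI argument. Your added bookkeeping (explicit bijections and the uniformity/class-balance check) only makes explicit what the paper leaves implicit, so the approach is the same and correct to the same degree of rigor.
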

\begin{proof}
Given our assumption of $|{\bf X}_i| = |\vec{s}|$, there exists an inverse ${\bf X}_i = \mathcal{R}^{-1}(\vec{s})$, and therefore $\vec{h}_i = \mathcal{E}(\mathcal{R}^{-1}(\vec{s}))$, i.e. there exists a deterministic function ($\mathcal{E}\circ\mathcal{R}^{-1}$) mapping $\vec{s}$ to $\vec{h}_i$.
The optimal classifier between the joint $p(\vec{h}_i, \vec{s})$ and the product of marginals $p(\vec{h}_i)p(\vec{s})$ then has (by Lemma 1) an error rate upper bound of $\mathrm{Err}^* = \frac{1}{2} \sum_{k=1}^{|{\bf X}|} p(\vec{h}_i^{(k)})^2$.
Therefore (as in Corollary 1), for the optimal $\vec{h}_i$, $|\vec{h}_i| = |{\bf X}_i|$, which by the same arguments as in Theorem 1 maximizes the mutual information between the neighborhood and high-level features, $\mathrm{MI}({\bf X}^{(k)}_i; \vec{h}_i)$.
\end{proof}

This motivates our use of a classifier between samples from the joint and the product of marginals, and using the binary cross-entropy (BCE) loss to optimize this classifier is well-understood in the context of neural network optimization.

\subsection{Overview of DGI}\label{sec:algo}

Assuming the single-graph setup (i.e., $({\bf X}, {\bf A})$ provided as input), we will now summarize the steps of the Deep Graph Infomax procedure:
\begin{enumerate}
	\item Sample a negative example by using the corruption function: $({\bf \widetilde{X}}, {\bf \widetilde{A}}) \sim \mathcal{C}({\bf X}, {\bf A})$.
	\item Obtain patch representations, $\vec{h}_i$ for the input graph by passing it through the encoder: ${\bf H} = \mathcal{E}({\bf X}, {\bf A}) = \{\vec{h}_1, \vec{h}_2, \dots, \vec{h}_N\}$.
	\item Obtain patch representations, $\vec{\widetilde{h}}_j$ for the negative example by passing it through the encoder: ${\bf \widetilde{H}} = \mathcal{E}({\bf \widetilde{X}}, {\bf \widetilde{A}}) = \{\vec{\widetilde{h}}_1, \vec{\widetilde{h}}_2, \dots, \vec{\widetilde{h}}_M\}$.
	\item Summarize the input graph by passing its patch representations through the readout function: $\vec{s} = \mathcal{R}({\bf H})$.
	\item Update parameters of $\mathcal{E}$, $\mathcal{R}$ and $\mathcal{D}$ by applying gradient descent to maximize Equation \ref{eqngan}.
\end{enumerate}

\begin{figure}
\centering 
\includegraphics[width=1.0\linewidth]{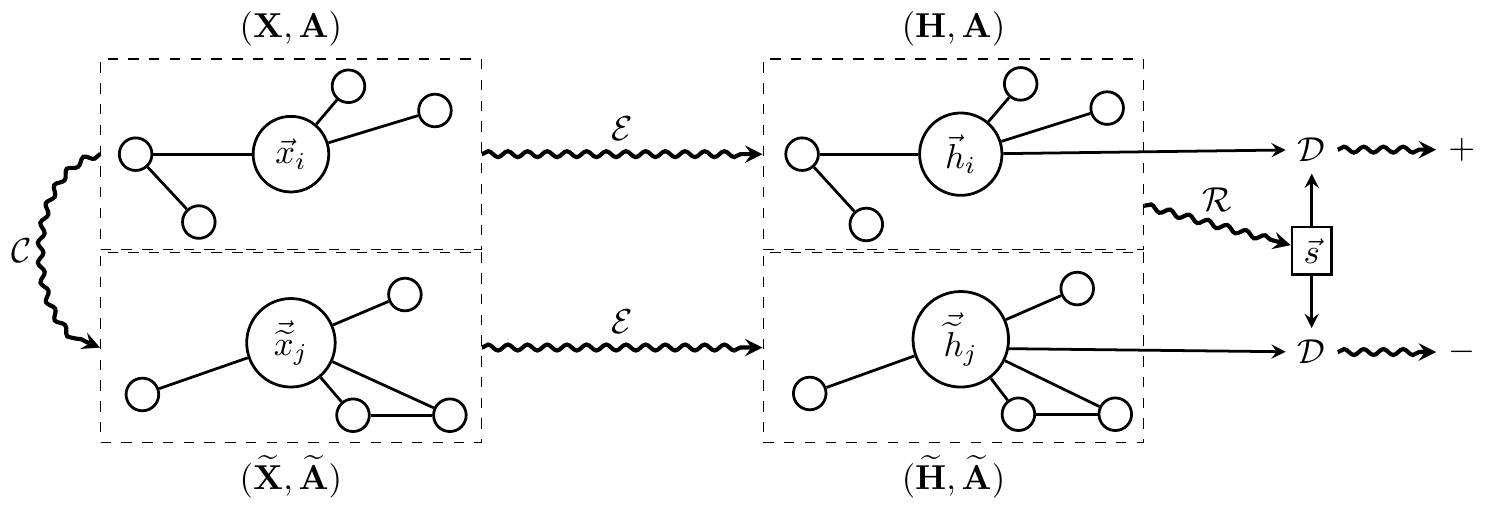} 
\caption{A high-level overview of Deep Graph Infomax. Refer to Section \ref{sec:algo} for more details.}
\label{fig:dgi}
\end{figure}

This algorithm is fully summarized by Figure \ref{fig:dgi}.

\section{Classification performance}

We have assessed the benefits of the representation learnt by the DGI encoder on a variety of node classification tasks (transductive as well as inductive), obtaining competitive results. In each case, DGI was used to learn patch representations in a fully unsupervised manner, followed by evaluating the node-level classification utility of these representations. This was performed by directly using these representations to train and test a simple linear (logistic regression) classifier. 


\subsection{Datasets}

We follow the experimental setup described in \citet{kipf2016semi} and \citet{hamilton2017inductive} on the following benchmark tasks: (1) classifying research papers into topics on the Cora, Citeseer and Pubmed citation networks \citep{sen2008collective}; (2) predicting the community structure of a social network modeled with Reddit posts; and (3) classifying protein roles within protein-protein interaction (PPI) networks \citep{zitnik2017predicting}, requiring generalisation to unseen networks. 

\begin{table}
\small
\caption{Summary of the datasets used in our experiments.}
\label{datasets}
\begin{center}
\begin{tabular}{c c c c c c c}
\toprule 
{\bf Dataset} & {\bf Task} & {\bf Nodes} & {\bf Edges} & {\bf Features} &  {\bf Classes} & {\bf Train/Val/Test Nodes} \\ \midrule
{\bf Cora} & Transductive & 2,708 & 5,429 & 1,433 & 7 & 140/500/1,000\\
{\bf Citeseer} & Transductive & 3,327 & 4,732 & 3,703 & 6 & 120/500/1,000 \\
{\bf Pubmed} & Transductive & 19,717 & 44,338 & 500 & 3 & 60/500/1,000\\
{\bf Reddit} & Inductive & 231,443 & 11,606,919 & 602 & 41 & 151,708/23,699/55,334\\
\multirow{2}{*}{\bf PPI} & \multirow{2}{*}{Inductive} & 56,944 & \multirow{2}{*}{818,716} & \multirow{2}{*}{50} & 121 & 44,906/6,514/5,524 \\
& & (24 graphs) & & & (multilbl.) & (20/2/2 graphs)\\
\bottomrule
\end{tabular}
\end{center}
\end{table}

Further information on the datasets may be found in Table \ref{datasets} and Appendix \ref{app:data}.

\subsection{Experimental setup}\label{sec:expt}

For each of three experimental settings (transductive learning, inductive learning on large graphs, and multiple graphs), we employed distinct encoders and corruption functions appropriate to that setting (described below).

\xhdr{Transductive learning} 
For the transductive learning tasks (Cora, Citeseer and Pubmed), our encoder is a one-layer Graph Convolutional Network (GCN) model \citep{kipf2016semi}, with the following propagation rule:
\begin{equation}
\label{eqngcn}
	\mathcal{E}({\bf X}, {\bf A}) = \sigma\left({\bf \hat{D}}^{-\frac{1}{2}}{\bf \hat{A}}{\bf \hat{D}}^{-\frac{1}{2}}{\bf X}{\bf \Theta}\right)
\end{equation}
where ${\bf \hat{A}} = {\bf A} + {\bf I}_N$ is the adjacency matrix with inserted self-loops and $\bf \hat{D}$ is its corresponding degree matrix; i.e. $\hat{D}_{ii} = \sum_j \hat{A}_{ij}$. For the nonlinearity, $\sigma$, we have applied the parametric ReLU (PReLU) function \citep{he2015delving}, and ${\bf \Theta} \in \mathbb{R}^{F \times F'}$ is a learnable linear transformation applied to every node, with $F' = 512$ features being computed (specially, $F' = 256$ on Pubmed due to memory limitations).

The corruption function used in this setting is designed to encourage the representations to properly encode structural similarities of different nodes in the graph; for this purpose, $\mathcal{C}$ preserves the original adjacency matrix (${\bf \widetilde{A}} = {\bf A}$), whereas the corrupted features, $\bf \widetilde{X}$, are obtained by row-wise shuffling of ${\bf X}$. That is, the corrupted graph consists of exactly the same nodes as the original graph, but they are located in different places in the graph, and will therefore receive different patch representations.  We demonstrate DGI is stable to other choices of corruption functions in Appendix \ref{app: corruption}, but we find those that preserve the graph structure result in the strongest features.

\xhdr{Inductive learning on large graphs}
For inductive learning, we may no longer use the GCN update rule in our encoder (as the learned filters rely on a fixed and known adjacency matrix); instead, we apply the \emph{mean-pooling} propagation rule, as used by GraphSAGE-GCN \citep{hamilton2017inductive}:
\begin{equation}
\label{eqnmeanp}
	\text{MP}({\bf X}, {\bf A}) = {\bf \hat{D}}^{-1}{\bf \hat{A}}{\bf X}{\bf \Theta}
\end{equation} 
with parameters defined as in Equation \ref{eqngcn}. Note that multiplying by ${\bf \hat{D}}^{-1}$ actually performs a normalized sum (hence the mean-pooling). While Equation \ref{eqnmeanp} explicitly specifies the adjacency and degree matrices, \emph{they are not needed}: identical inductive behaviour may be observed by a \emph{constant} attention mechanism across the node's neighbors, as used by the Const-GAT model \citep{velickovic2018graph}.

For Reddit, our encoder is a three-layer mean-pooling model with skip connections \citep{he2016deep}:
\begin{equation}
    \widetilde{\text{MP}}({\bf X}, {\bf A}) = \sigma\left({\bf X}{\bf \Theta'}\|\text{MP}({\bf X}, {\bf A})\right) \qquad \mathcal{E}({\bf X}, {\bf A}) = \widetilde{\text{MP}}_3(\widetilde{\text{MP}}_2(\widetilde{\text{MP}}_1({\bf X}, {\bf A}), {\bf A}), {\bf A})
\end{equation}
where $\|$ is featurewise concatenation (i.e. the central node and its neighborhood are handled separately). We compute $F'=512$ features in each MP layer, with the PReLU activation for $\sigma$.
\begin{figure}
    \centering
    \includegraphics[width=\linewidth]{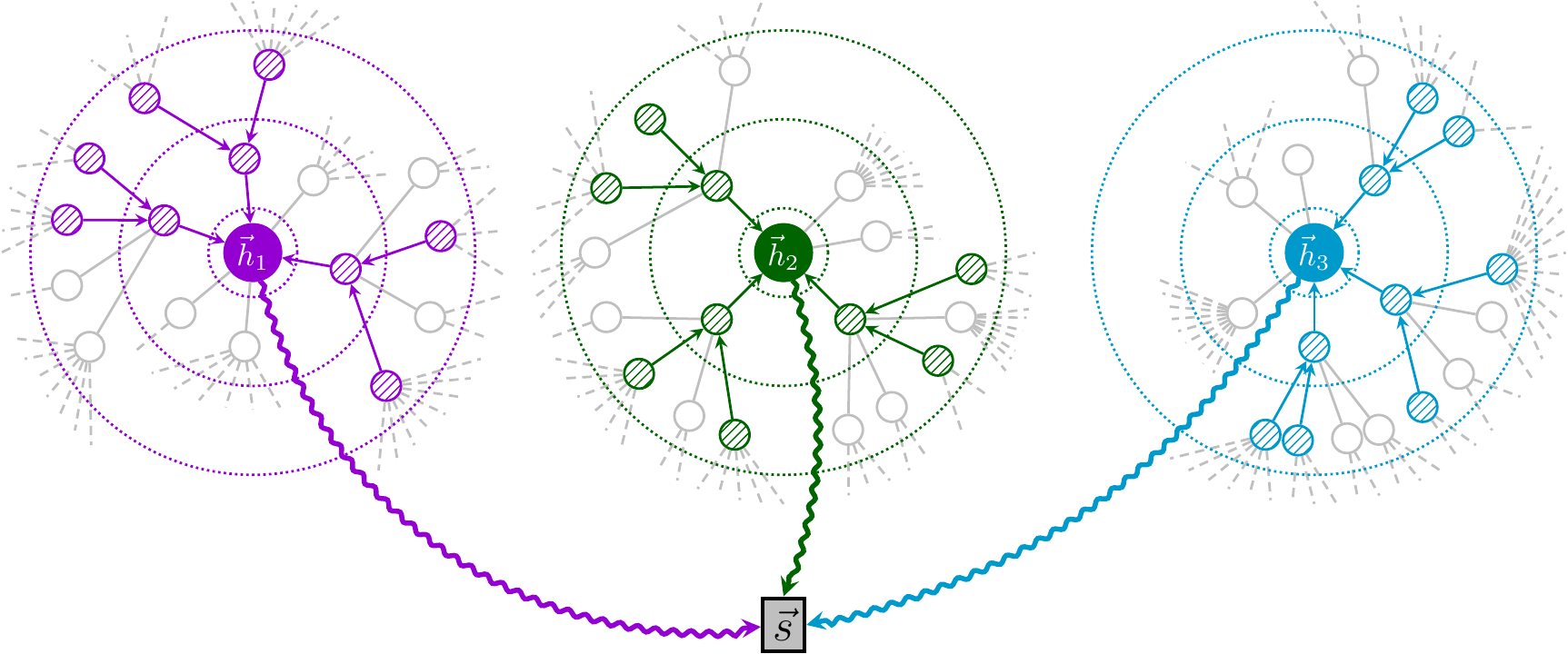}
    \caption{The DGI setup on large graphs (such as Reddit). Summary vectors, $\vec{s}$, are obtained by combining several subsampled patch representations, $\vec{h}_i$ (here obtained by sampling three and two neighbors in the first and second level, respectively).}
    \label{fig:subdgi}
\end{figure}

Given the large scale of the dataset, it will not fit into GPU memory entirely. Therefore, we use the subsampling approach of \cite{hamilton2017inductive}, where a minibatch of nodes is first selected, and then a subgraph centered around each of them is obtained by \emph{sampling node neighborhoods with replacement}. Specifically, we sample 10, 10 and 25 neighbors at the first, second and third level, respectively---thus, each subsampled patch has 1 + 10 + 100 + 2500 = 2611 nodes. Only the computations necessary for deriving the central node $i$'s patch representation, $\vec{h}_i$, are performed. These representations are then used to derive the summary vector, $\vec{s}$, for the minibatch (Figure \ref{fig:subdgi}). We used minibatches of 256 nodes throughout training.

To define our corruption function in this setting, we use a similar approach as in the transductive tasks, but treat each subsampled patch as a separate graph to be corrupted (i.e., we row-wise shuffle the feature matrices within a subsampled patch). Note that this may very likely cause the central node's features to be swapped out for a sampled neighbor's features, further encouraging diversity in the negative samples. The patch representation obtained in the central node is then submitted to the discriminator.

\xhdr{Inductive learning on multiple graphs} 
For the PPI dataset, inspired by previous successful supervised architectures \citep{velickovic2018graph}, our encoder is a three-layer mean-pooling model with dense skip connections \citep{he2016deep,huang2017densely}:
\begin{align}
	{\bf H}_1 &= \sigma\left(\text{MP}_1({\bf X}, {\bf A})\right)\\
	{\bf H}_2 &= \sigma\left(\text{MP}_2({\bf H}_1 + {\bf X}{\bf W}_{\text{skip}}, {\bf A})\right)\\
	\mathcal{E}({\bf X}, {\bf A}) &= \sigma\left(\text{MP}_3({\bf H}_2 + {\bf H}_1 + {\bf X}{\bf W}_{\text{skip}}, {\bf A})\right)
\end{align} 
where ${\bf W}_{\text{skip}}$ is a learnable projection matrix, and $\text{MP}$ is as defined in Equation \ref{eqnmeanp}. We compute $F' = 512$ features in each MP layer, using the PReLU activation for $\sigma$.

In this multiple-graph setting, we opted to use \emph{randomly sampled training graphs} as negative examples (i.e., our corruption function simply samples a different graph from the training set). We found this method to be the most stable, considering that over 40\% of the nodes have all-zero features in this dataset. To further expand the pool of negative examples, we also apply dropout \citep{srivastava2014dropout} to the input features of the sampled graph. We found it beneficial to standardize the learnt embeddings across the training set prior to providing them to the logistic regression model.

\xhdr{Readout, discriminator, and additional training details}
Across all three experimental settings, we employed identical readout functions and discriminator architectures. 

For the readout function, we use a simple averaging of all the nodes' features:
\begin{equation}
	\mathcal{R}({\bf H}) = \sigma\left(\frac{1}{N}\sum_{i=1}^N \vec{h}_i	\right)
\end{equation}
where $\sigma$ is the logistic sigmoid nonlinearity. While we have found this readout to perform the best across all our experiments, we assume that its power will diminish with the increase in graph size, and in those cases, more sophisticated readout architectures such as set2vec \citep{vinyals2015order} or DiffPool \citep{ying2018hierarchical} are likely to be more appropriate.

The discriminator scores summary-patch representation pairs by applying a simple bilinear scoring function (similar to the scoring used by \cite{oord2018representation}):
\begin{equation}
	\mathcal{D}(\vec{h}_i, \vec{s}) = \sigma\left(\vec{h}_i^T{\bf W}\vec{s}\right)
\end{equation}
Here, ${\bf W}$ is a learnable scoring matrix and $\sigma$ is the logistic sigmoid nonlinearity, used to convert scores into probabilities of $(\vec{h}_i, \vec{s})$ being a positive example.

All models are initialized using Glorot initialization \citep{glorot2010understanding} and trained to maximize the mutual information provided in Equation \ref{eqngan} on the available nodes (all nodes for the transductive, and training nodes only in the inductive setup) using the Adam SGD optimizer \citep{kingma2014adam} with an initial learning rate of 0.001 (specially, $10^{-5}$ on Reddit). On the transductive datasets, we use an early stopping strategy on the observed \emph{training} loss, with a patience of 20 epochs\footnote{
A reference DGI implementation may be found at \url{https://github.com/PetarV-/DGI}.}. On the inductive datasets we train for a fixed number of epochs (150 on Reddit, 20 on PPI).

\begin{table}[ht]
\caption{Summary of results in terms of classification accuracies (on transductive tasks) or micro-averaged F$_1$ scores (on inductive tasks). In the first column, we highlight the kind of data available to each method during training (${\bf X}$: features, ${\bf A}$: adjacency matrix, ${\bf Y}$: labels). ``GCN'' corresponds to a two-layer DGI encoder trained in a supervised manner.}
\label{transtable}
\begin{center}
\begin{tabular}{l l l l l}
\multicolumn{5}{c}{\textbf{\emph{Transductive}}}\\
\toprule
{\bf Available data} & {\bf Method} & {\bf Cora} & {\bf Citeseer} & {\bf Pubmed}\\ \midrule
${\bf X}$ & Raw features & 47.9 $\pm$ 0.4\% & 49.3 $\pm$ 0.2\% & 69.1 $\pm$ 0.3\%\\
${\bf A}, {\bf Y}$ & LP \citep{zhu2003semi} & 68.0\% & 45.3\% & 63.0\%\\
${\bf A}$ & DeepWalk \citep{perozzi2014deepwalk} & 67.2\% & 43.2\% & 65.3\%\\
${\bf X}, {\bf A}$ & DeepWalk + features & 70.7 $\pm$ 0.6\% & 51.4 $\pm$ 0.5\% & 74.3 $\pm$ 0.9\%\\\midrule
${\bf X}, {\bf A}$ & Random-Init (ours) & 69.3 $\pm$ 1.4\% & 61.9 $\pm$ 1.6\% & 69.6 $\pm$ 1.9\% \\
${\bf X}, {\bf A}$ & {\bf DGI} (ours) & {\bf 82.3} $\pm$ 0.6\% & {\bf 71.8} $\pm$ 0.7\% & {\bf 76.8} $\pm$ 0.6\%\\\midrule
${\bf X}, {\bf A}, {\bf Y}$ & GCN \citep{kipf2016semi} & 81.5\% & 70.3\% & 79.0\%\\
${\bf X}, {\bf A}, {\bf Y}$ & Planetoid \citep{yang2016revisiting} & 75.7\% & 64.7\% & 77.2\%\\
\bottomrule
\end{tabular}
\begin{tabular}{l l l l}
\\
\multicolumn{4}{c}{\textbf{\emph{Inductive}}}\\
\toprule 
{\bf Available data} & {\bf Method} & {\bf Reddit} & {\bf PPI}\\ \midrule
${\bf X}$ & Raw features & 0.585 & 0.422 \\
${\bf A}$ & DeepWalk \citep{perozzi2014deepwalk} & 0.324 & --- \\
${\bf X}, {\bf A}$ & DeepWalk + features & 0.691 & --- \\ \midrule
${\bf X}, {\bf A}$ & GraphSAGE-GCN \citep{hamilton2017inductive} & 0.908 & 0.465 \\
${\bf X}, {\bf A}$ & GraphSAGE-mean \citep{hamilton2017inductive} & 0.897 & 0.486 \\
${\bf X}, {\bf A}$ & GraphSAGE-LSTM \citep{hamilton2017inductive} & 0.907 & 0.482 \\
${\bf X}, {\bf A}$ & GraphSAGE-pool \citep{hamilton2017inductive} & 0.892 & 0.502 \\\midrule
${\bf X}, {\bf A}$ & Random-Init (ours) & 0.933 $\pm$ 0.001 & {0.626} $\pm$ 0.002 \\
${\bf X}, {\bf A}$ & {\bf DGI} (ours) & {\bf 0.940} $\pm$ 0.001 & {\bf 0.638} $\pm$ 0.002  \\\midrule
${\bf X}, {\bf A}, {\bf Y}$ & FastGCN \citep{chen2018fastgcn} & 0.937 & ---\\
${\bf X}, {\bf A}, {\bf Y}$ & Avg. pooling \citep{zhang2018gaan} & 0.958 $\pm$ 0.001 & 0.969 $\pm$ 0.002\\
\bottomrule
\end{tabular}
\end{center}
\end{table}

\subsection{Results}
The results of our comparative evaluation experiments are summarized in Table \ref{transtable}. 

For the transductive tasks, we report the mean classification accuracy (with standard deviation) on the test nodes of our method after 50 runs of training (followed by logistic regression), and reuse the metrics already reported in \cite{kipf2016semi} for the performance of DeepWalk and GCN, as well as Label Propagation (LP) \citep{zhu2003semi} and Planetoid \citep{yang2016revisiting}---a representative supervised random walk method. Specially, we provide results for training the logistic regression on raw input features, as well as DeepWalk with the input features concatenated. 

For the inductive tasks, we report the micro-averaged F$_1$ score on the (unseen) test nodes, averaged after 50 runs of training, and reuse the metrics already reported in \cite{hamilton2017inductive} for the other techniques. Specifically, as our setup is unsupervised, we compare against the unsupervised GraphSAGE approaches. We also provide supervised results for two related architectures---FastGCN \citep{chen2018fastgcn} and Avg. pooling \citep{zhang2018gaan}.

Our results demonstrate strong performance being achieved across all five datasets. We particularly note that the DGI approach is competitive with the results reported for the GCN model \emph{with the supervised loss}, even exceeding its performance on the Cora and Citeseer datasets. We assume that these benefits stem from the fact that, indirectly, the DGI approach allows for every node to have access to structural properties of the entire graph, whereas the supervised GCN is limited to only two-layer neighborhoods (by the extreme sparsity of the training signal and the corresponding threat of overfitting). It should be noted that, while we are capable of outperforming equivalent supervised encoder architectures, our performance still does not surpass the current supervised transductive state of the art (which is held by methods such as GraphSGAN \citep{ding2018semi}). We further observe that the DGI method successfully outperformed all the competing unsupervised GraphSAGE approaches on the Reddit and PPI datasets---thus verifying the potential of methods based on local mutual information maximization in the inductive node classification domain. Our Reddit results are competitive with the supervised state of the art, whereas on PPI the gap is still large---we believe this can be attributed to the extreme sparsity of available node features (over 40\% of the nodes having all-zero features), that our encoder heavily relies on.

We note that a \emph{randomly initialized} graph convolutional network may already extract highly useful features and represents a strong baseline---a well-known fact, considering its links to the Weisfeiler-Lehman graph isomorphism test \citep{weisfeiler1968reduction}, that have already been highlighted and analyzed by \cite{kipf2016semi} and \cite{hamilton2017inductive}. As such, we also provide, as \emph{Random-Init}, the logistic regression performance on embeddings obtained from a randomly initialized encoder. Besides demonstrating that DGI is able to further improve on this strong baseline, it particularly reveals that, on the inductive datasets, previous random walk-based negative sampling methods may have been ineffective for learning appropriate features for the classification task.

Lastly, it should be noted that deeper encoders correspond to more pronounced \emph{mixing} between recovered patch representations, reducing the effective variability of our positive/negative examples' pool. We believe that this is the reason why shallower architectures performed better on some of the datasets. While we cannot say that these trends will hold in general, with the DGI loss function we generally found benefits from employing \emph{wider}, rather than \emph{deeper} models.

\section{Qualitative analysis}
We performed a diverse set of analyses on the embeddings learnt by the DGI algorithm in order to better understand the properties of DGI. We focus our analysis exclusively on the Cora dataset (as it has the smallest number of nodes, significantly aiding clarity).

\begin{figure}
\centering
\begin{subfigure}{0.33\textwidth}
  \centering
  \includegraphics[width=1.0\textwidth]{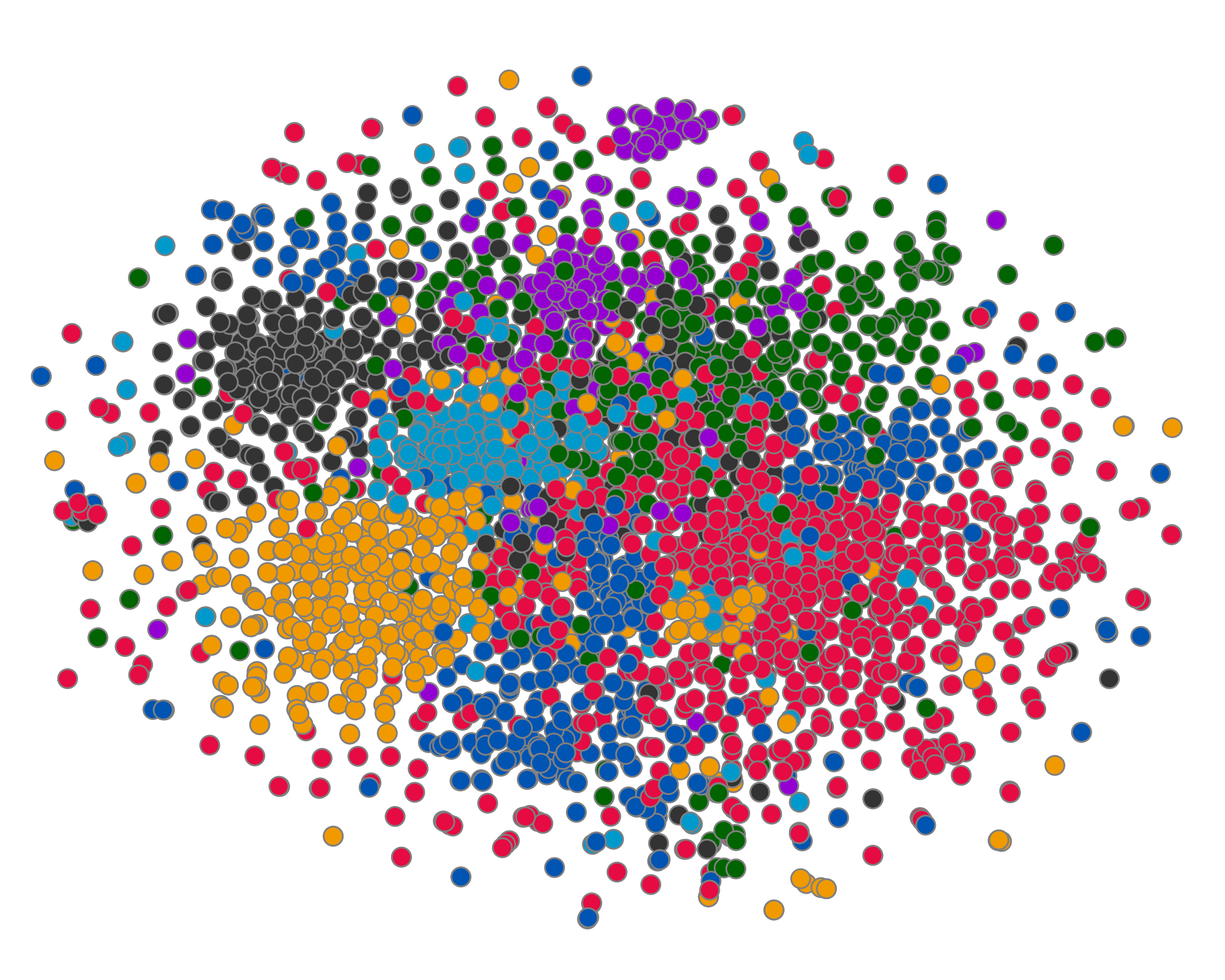}
\end{subfigure}%
\begin{subfigure}{0.33\textwidth}
  \centering
  \includegraphics[width=1.0\textwidth]{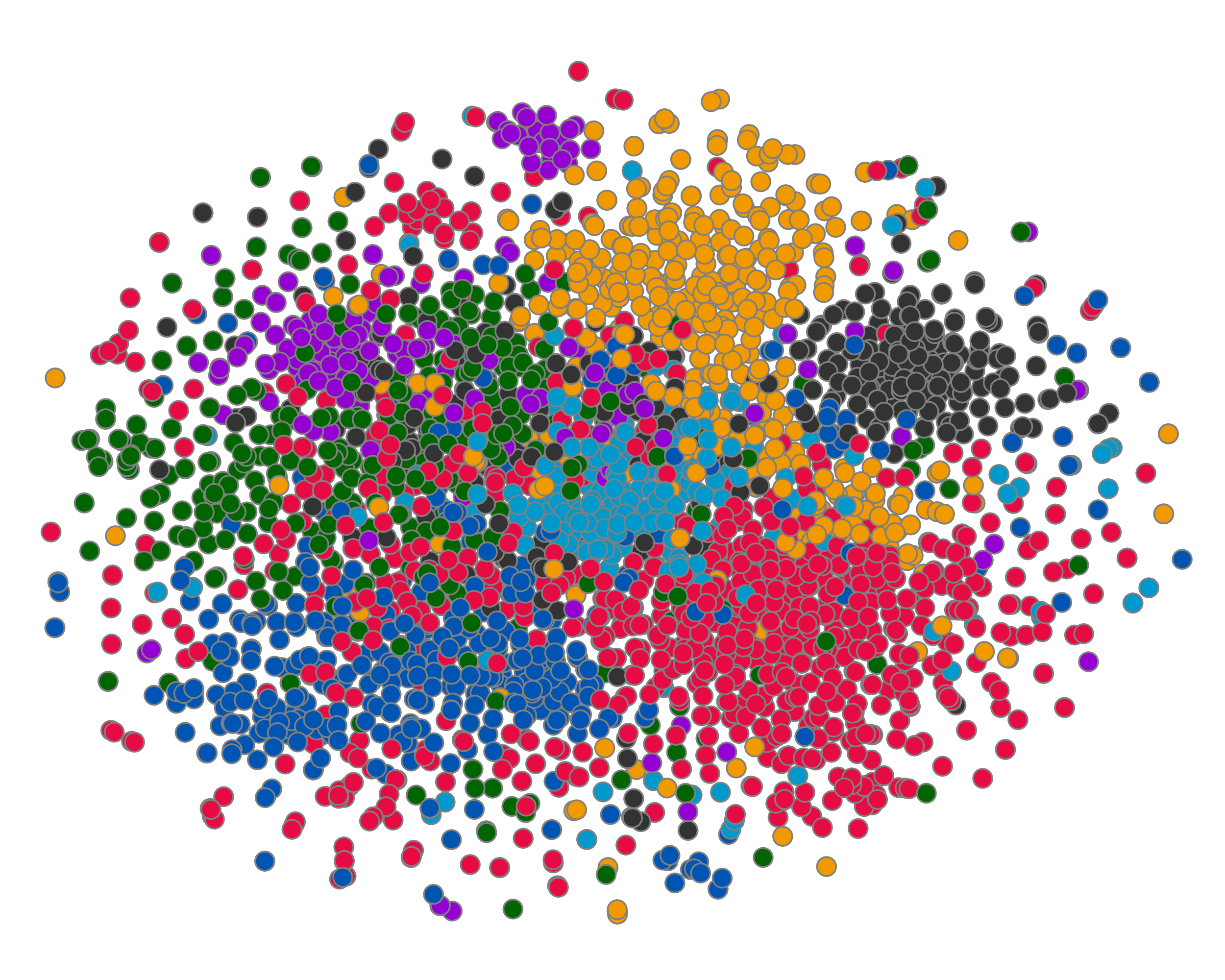}
\end{subfigure}%
\begin{subfigure}{0.33\textwidth}
  \centering
  \includegraphics[width=1.0\textwidth]{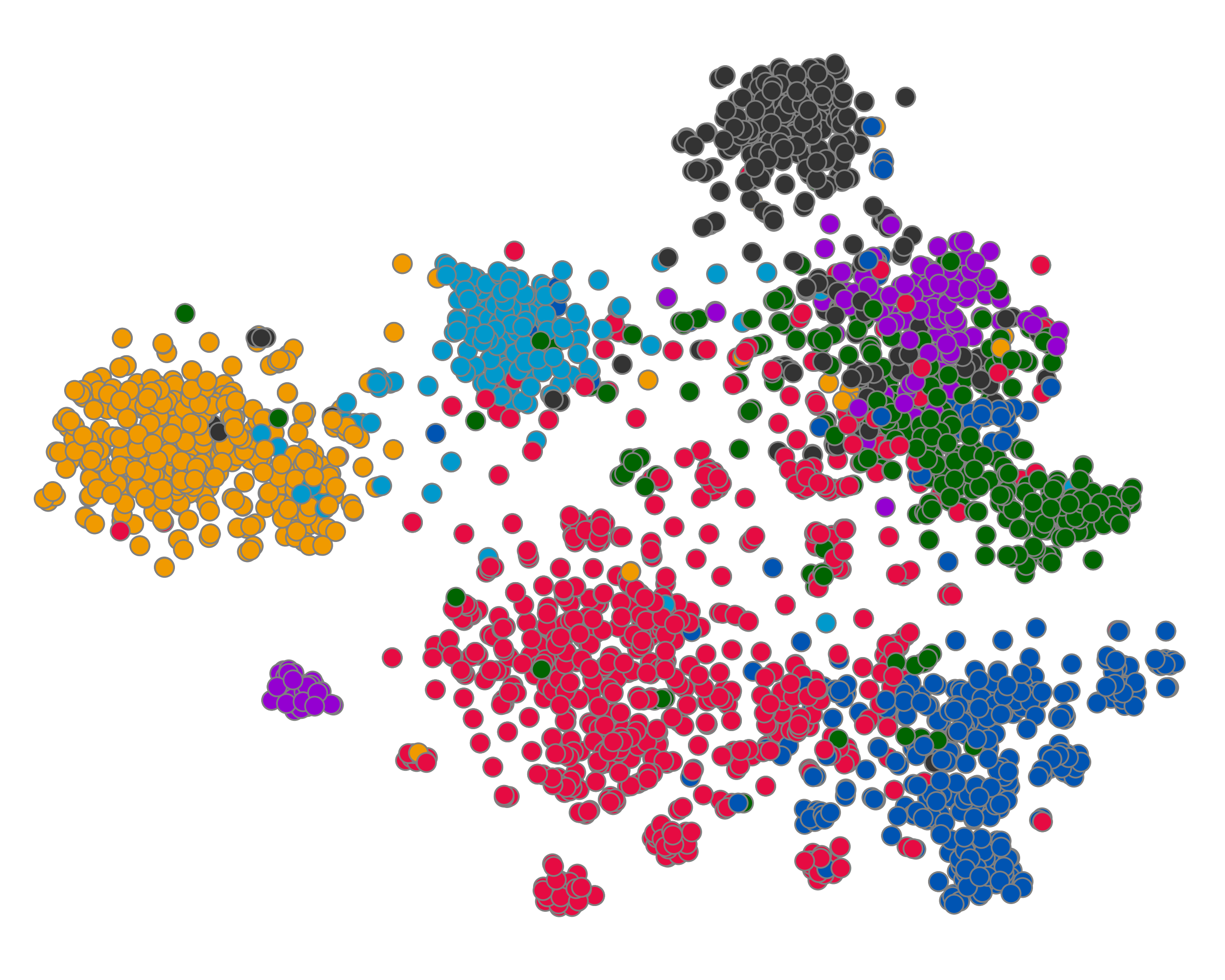}
\end{subfigure}
\caption{t-SNE embeddings of the nodes in the Cora dataset from the raw features ({\bf left}), features from a randomly initialized DGI model ({\bf middle}), and a learned DGI model ({\bf right}). The clusters of the learned DGI model's embeddings are clearly defined, with a Silhouette score of 0.234.}
\label{fig:cora_raw_and_random_features}
\end{figure}

A standard set of ``evolving'' t-SNE plots \citep{maaten2008visualizing} of the embeddings is given in Figure \ref{fig:cora_raw_and_random_features}. As expected given the quantitative results, the learnt embeddings' 2D projections exhibit discernible clustering in the 2D projected space (especially compared to the raw features and Random-Init), which respects the seven topic classes of Cora. The projection obtains a Silhouette score \citep{rousseeuw1987silhouettes} of 0.234, which compares favorably with the previous reported score of 0.158 for Embedding Propagation \citep{duran2017learning}.

We ran further analyses, revealing insights into DGI's mechanism of learning, isolating \emph{biased} embedding dimensions for pushing the negative example scores down and using the remainder to encode useful information about positive examples. We leverage these insights to retain competitive performance to the supervised GCN even after \emph{half} the dimensions are removed from the patch representations provided by the encoder. These---and several other---qualitative and ablation studies can be found in Appendix \ref{app:quali}.





\section{Conclusions}

We have presented Deep Graph Infomax (DGI), a new approach for learning unsupervised representations on graph-structured data. By leveraging local mutual information maximization across the graph's patch representations, obtained by powerful graph convolutional architectures, we are able to obtain node embeddings that are mindful of the global structural properties of the graph. This enables competitive performance across a variety of both transductive and inductive classification tasks, at times even outperforming relevant \emph{supervised} architectures.

\newpage
\subsubsection*{Acknowledgments}
We would like to thank the developers of PyTorch \citep{paszke2017automatic}. PV and PL have received funding from the European Union's Horizon 2020 research and innovation programme PROPAG-AGEING under grant agreement No 634821. We specially thank Hugo Larochelle and Jian Tang for the extremely useful discussions, and Andreea Deac, Arantxa Casanova, Ben Poole, Graham Taylor, Guillem Cucurull, Justin Gilmer, Nithium Thain and Zhaocheng Zhu for reviewing the paper prior to submission.

\bibliography{iclr2019_conference}
\bibliographystyle{iclr2019_conference}

\appendix

\section{Further dataset details}\label{app:data}

\xhdr{Transductive learning}
We utilize three standard citation network benchmark datasets---Cora, Citeseer and Pubmed \citep{sen2008collective}---and closely follow the transductive experimental setup of \cite{yang2016revisiting}. In all of these datasets, nodes correspond to documents and edges to (undirected) citations. Node features correspond to elements of a bag-of-words representation of a document. Each node has a class label. We allow for only 20 nodes per class to be used for training---however, honouring the transductive setup, the unsupervised learning algorithm has access to all of the nodes' feature vectors. The predictive power of the learned representations is evaluated on 1000 test nodes.

\xhdr{Inductive learning on large graphs}
We use a large graph dataset (231,443 nodes and 11,606,919 edges) of Reddit posts created during September 2014 (derived and preprocessed as in \cite{hamilton2017inductive}). The objective is to predict the posts' community (\emph{``subreddit''}), based on the GloVe embeddings of their content and comments \citep{pennington2014glove}, as well as metrics such as score or number of comments. Posts are linked together in the graph if the same user has commented on both. Reusing the inductive setup of \cite{hamilton2017inductive}, posts made in the first 20 days of the month are used for training, while the remaining posts are used for validation or testing and are \emph{invisible} to the training algorithm.

\xhdr{Inductive learning on multiple graphs}
We make use of a protein-protein interaction (PPI) dataset that consists of graphs corresponding to different human tissues \citep{zitnik2017predicting}. The dataset contains 20 graphs for training, 2 for validation and 2 for testing. Critically, testing graphs remain \emph{completely unobserved} during training. To construct the graphs, we used the preprocessed data provided by~\cite{hamilton2017inductive}. Each node has 50 features that are composed of positional gene sets, motif gene sets and immunological signatures. There are 121 labels for each node set from gene ontology, collected from the Molecular Signatures Database \citep{subramanian2005gene}, and a node can possess several labels simultaneously.

\section{Further qualitative analysis}\label{app:quali}

\begin{figure}
\centering
\begin{subfigure}{0.5\textwidth}
  \centering
  \includegraphics[width=1.0\textwidth]{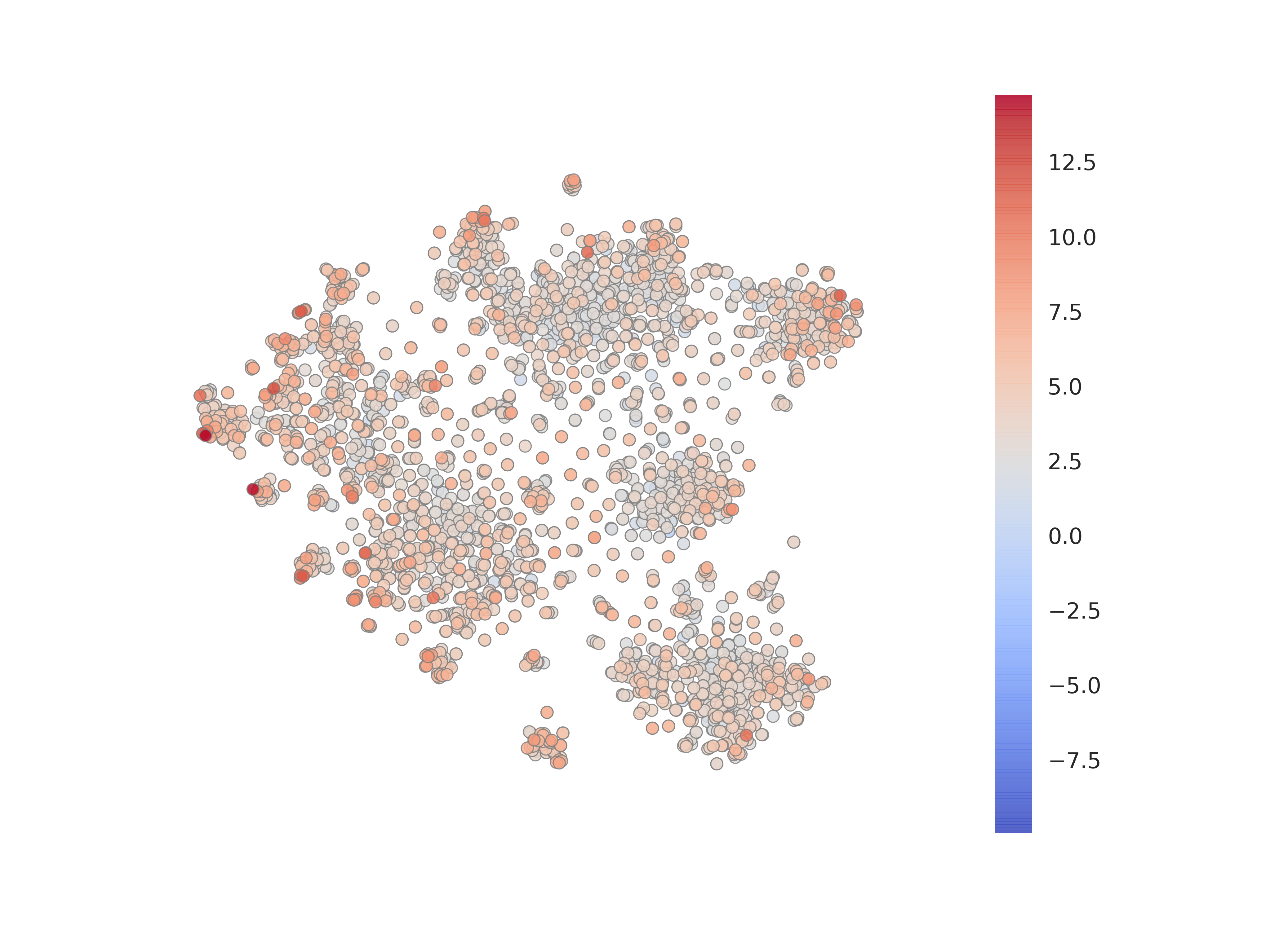}
\end{subfigure}%
\begin{subfigure}{0.5\textwidth}
  \centering
  \includegraphics[width=1.0\textwidth]{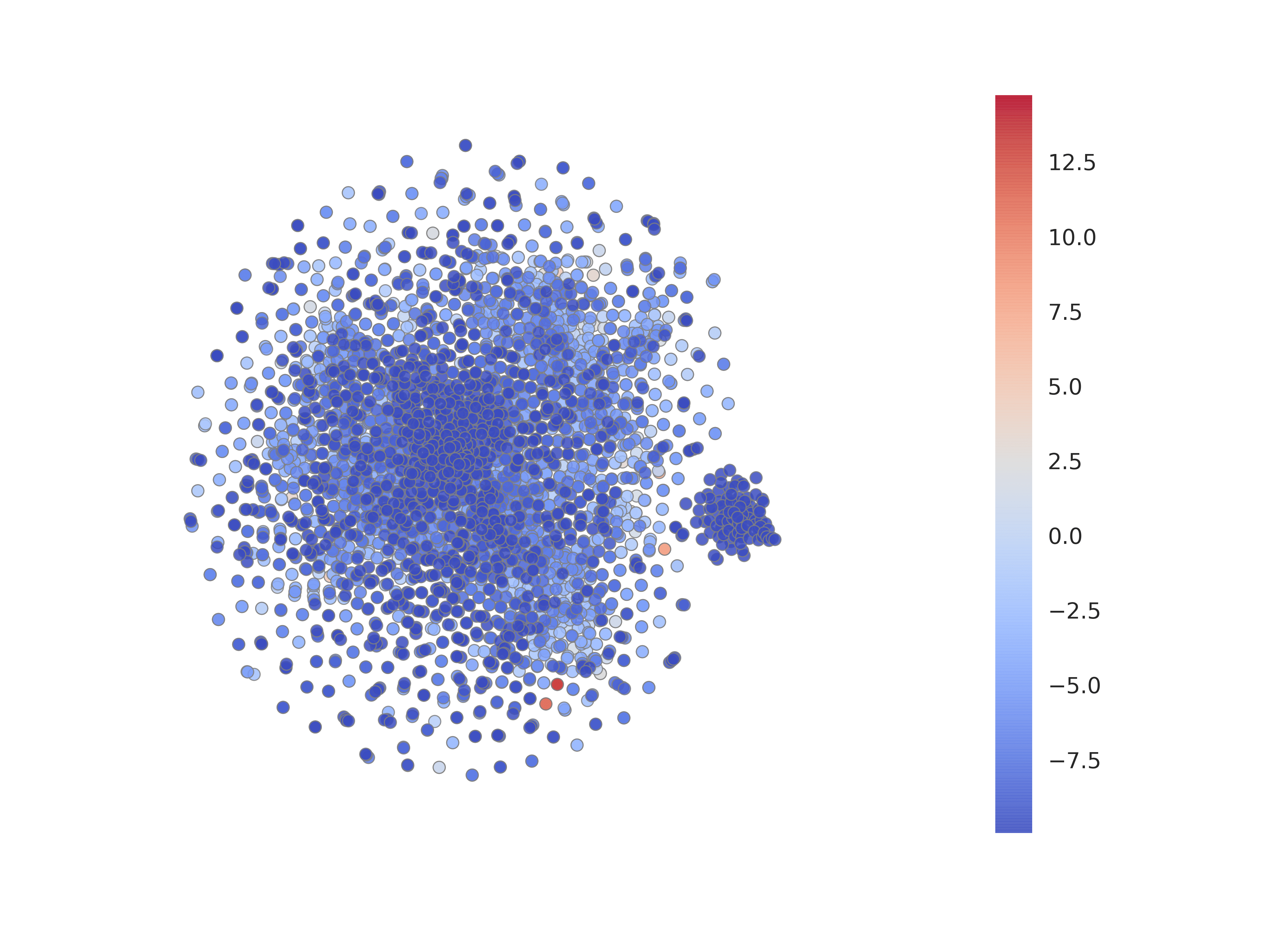}
\end{subfigure}
\caption{Discriminator scores, $\mathcal{D}\left(\vec{h}_i, \vec{s}\right)$, attributed to each node in the Cora dataset shown over a t-SNE of the DGI algorithm. Shown for both the original graph ({\bf left}) and a negative sample ({\bf right}).}
\label{fig:cora_dis_score_tsne}
\end{figure}

\xhdr{Visualizing discriminator scores} 
After obtaining the t-SNE visualizations, we turned our attention to the discriminator---and visualized the scores it attached to various nodes, for both the positive and a (randomly sampled) negative example (Figure \ref{fig:cora_dis_score_tsne}). From here we can make an interesting observation---within the ``clusters'' of the learnt embeddings on the positive Cora graph, only a handful of ``hot'' nodes are selected to receive high discriminator scores. This suggests that there may be a clear distinction between embedding dimensions used for discrimination and classification, which we more thoroughly investigate in the next paragraph. In addition, we may observe that, as expected, the model is unable to find any strong structure within a negative example. Lastly, a few negative examples achieve high discriminator scores---a phenomenon caused by the existence of low-degree nodes in Cora (making the probability of a node ending up in an identical context it had in the positive graph non-negligible).

\begin{figure}
\centering
  \includegraphics[width=0.8\textwidth]{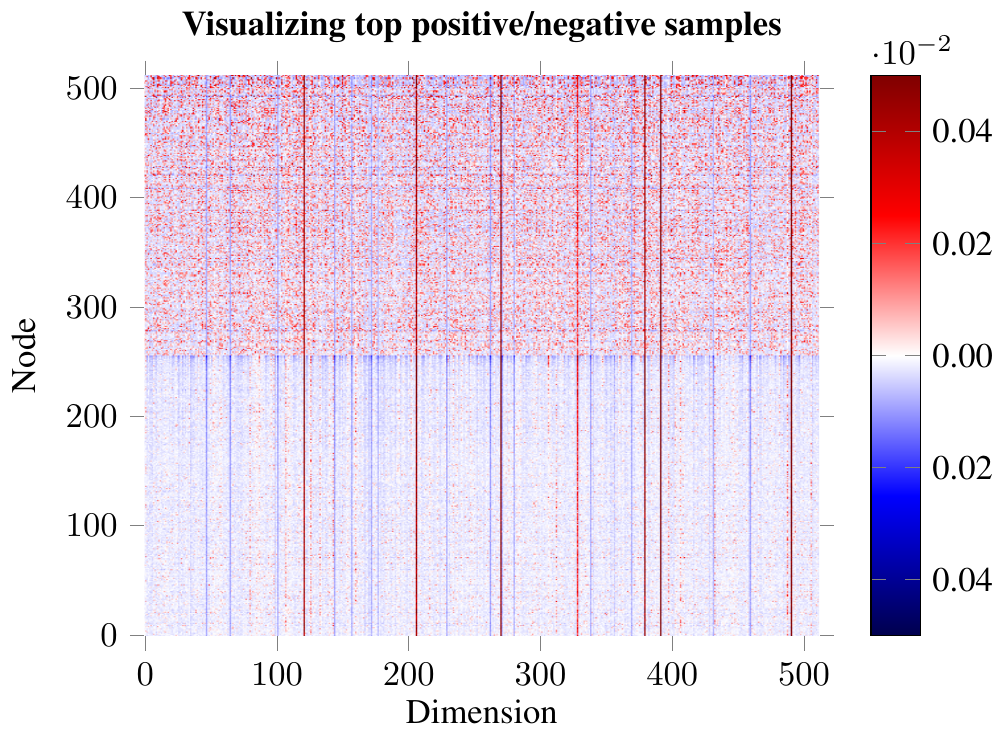}
  \caption{The learnt embeddings of the highest-scored positive examples (\emph{upper half}), and the lowest-scored negative examples (\emph{lower half}).}
  \label{fig:embeds}
\end{figure}

\begin{figure}
\centering
\includegraphics[width=0.49\linewidth]{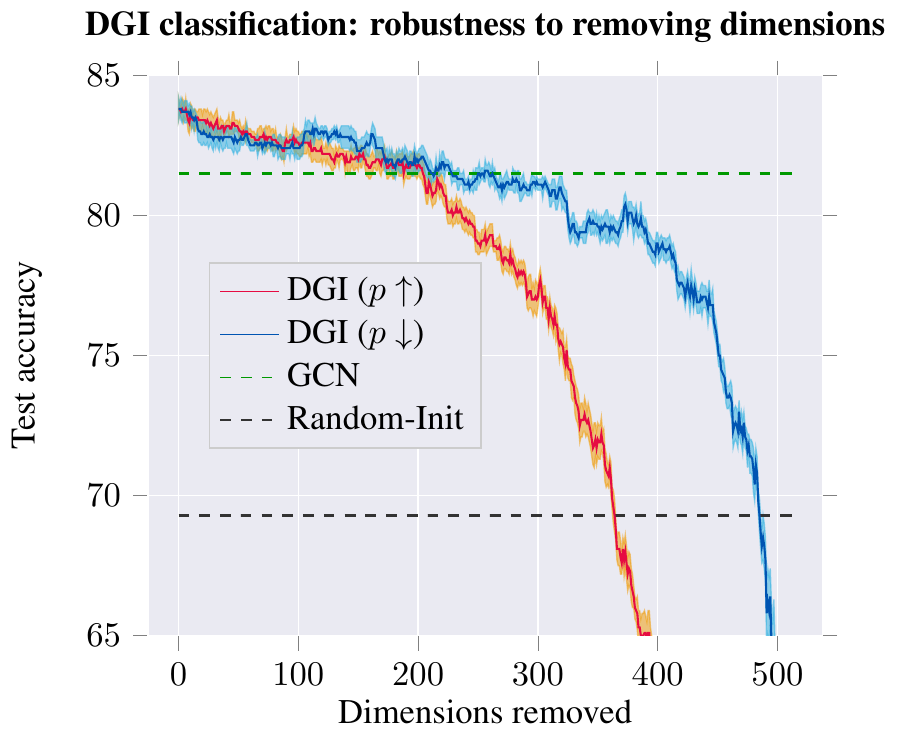}
\includegraphics[width=0.49\linewidth]{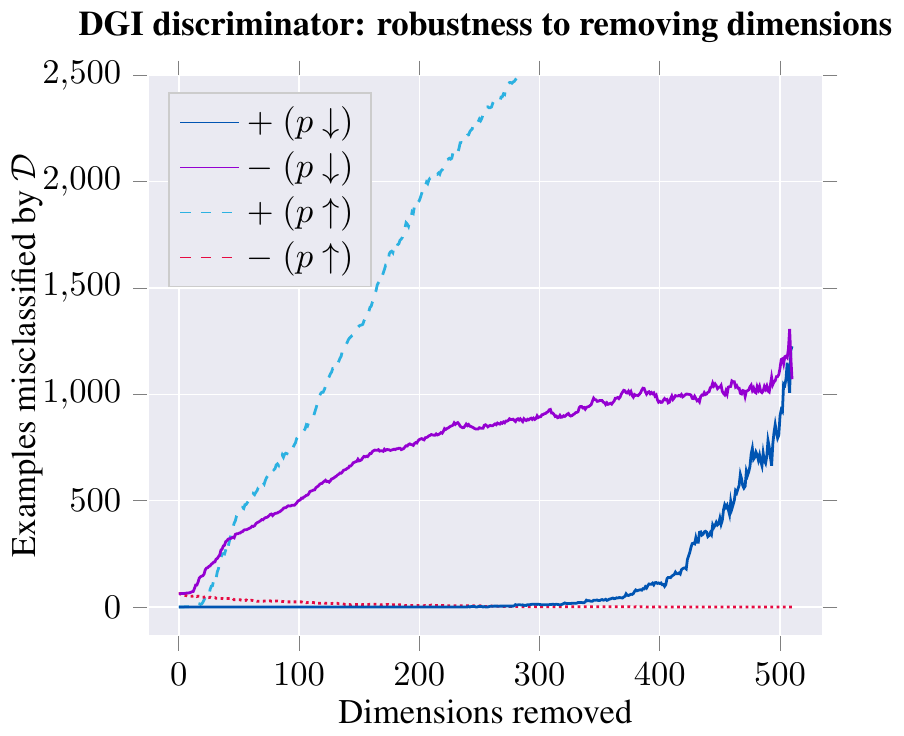}
\caption{Classification performance (in terms of test accuracy of logistic regression; {\bf left}) and discriminator performance (in terms of number of poorly discriminated positive/negative examples; {\bf right}) on the learnt DGI embeddings, after removing a certain number of dimensions from the embedding---either starting with most distinguishing ($p\uparrow$) or least distinguishing ($p\downarrow$).}
\label{fig:cora_embs}
\end{figure}

\xhdr{Impact and role of embedding dimensions}
Guided by the previous result, we have visualized the embeddings for the top-scoring positive and negative examples (Figure \ref{fig:embeds}). The analysis revealed existence of distinct dimensions in which both the positive and negative examples are \emph{strongly biased}. We hypothesize that, given the random shuffling, the average \emph{expected} activation of a negative example is zero, and therefore strong biases are required to ``push'' the example down in the discriminator. The positive examples may then use the remaining dimensions to both counteract this bias and encode patch similarity. To substantiate this claim, we order the 512 dimensions based on how distinguishable the positive and negative examples are in them (using $p$-values obtained from a t-test as a proxy). We then remove these dimensions from the embedding, respecting this order---either starting from the most distinguishable ($p\uparrow$) or least distinguishable dimensions ($p\downarrow$)---monitoring how this affects both classification and discriminator performance (Figure \ref{fig:cora_embs}). The observed trends largely support our hypothesis: if we start by removing the biased dimensions first ($p\downarrow$), the classification performance holds up for much longer (allowing us to remove over \emph{half} of the embedding dimensions while remaining competitive to the supervised GCN), and the positive examples mostly remain correctly discriminated until well over half the dimensions are removed.

\section{Robustness to Choice of Corruption Function}\label{app: corruption}
Here, we consider alternatives to our corruption function, $\mathcal{C}$, used to produce negative graphs. We generally find that, for the node classification task, DGI is stable and robust to different strategies. However, for learning graph features towards other kinds of tasks, the design of appropriate corruption strategies remains an area of open research.  

Our corruption function described in Section \ref{sec:expt} preserves the original adjacency matrix (${\bf \widetilde{A}} = {\bf A}$) but corrupts the features, $\bf \widetilde{X}$, via row-wise shuffling of ${\bf X}$. In this case, the negative graph is constrained to be isomorphic to the positive graph, which should not have to be mandatory. We can instead produce a negative graph by directly \emph{corrupting} the adjacency matrix.  

Therefore, we first consider an alternative corruption function $\mathcal{C}$ which preserves the features ($\bf \widetilde{X} = \bf X$) but instead adds or removes edges from the adjacency matrix ($\bf \widetilde{A} \neq \bf A$). This is done by sampling, i.i.d., a \emph{switch} parameter ${\bf\Sigma}_{ij}$, which determines whether to corrupt the adjacency matrix at position $(i, j)$. Assuming a given \emph{corruption rate}, $\rho$, we may define $\mathcal{C}$ as performing the following operations:
\begin{align}
    {\bf\Sigma}_{ij} &\sim \text{Bernoulli}(\rho)\\
    {\bf \widetilde{A}} &= {\bf A} \oplus {\bf\Sigma}
\end{align}
where $\oplus$ is the XOR (exclusive OR) operation.

This alternative strategy produces a negative graph with the same features, but different connectivity.  Here, the corruption rate of $\rho=0$ corresponds to an unchanged adjacency matrix (i.e. the positive and negative graphs are \emph{identical} in this case). In this regime, learning is impossible for the discriminator, and the performance of DGI is in line with a randomly initialized DGI model. At higher rates of noise, however, DGI produces competitive embeddings.

We also consider \textit{simultaneous} feature shuffling ($\bf \widetilde{X} \neq \bf X$) and adjacency matrix perturbation (${\bf \widetilde{A}} \neq {\bf A}$), both as described before. We find that DGI still learns useful features under this compound corruption strategy---as expected, given that feature shuffling is already equivalent to an (isomorphic) adjacency matrix perturbation.

From both studies, we may observe that a certain lower bound on the positive graph perturbation rate is required to obtain competitive node embeddings for the classification task on Cora. Furthermore, the features learned for downstream node classification tasks are most powerful when the negative graph has similar levels of connectivity to the positive graph. 

The classification performance peaks when the graph is perturbed to a reasonably high level, but remains \emph{sparse}; i.e. the mixing between the separate 1-step patches is not substantial, and therefore the pool of negative examples is still \emph{diverse} enough. Classification performance is impacted only marginally at higher rates of corruption---corresponding to \emph{dense} negative graphs, and thus a less rich negative example pool---but still considerably outperforming the unsupervised baselines we have considered. This could be seen as further motivation for relying solely on feature shuffling, without adjacency perturbations---given that feature shuffling is a trivial way to guarantee a diverse set of negative examples, without incurring significant computational costs per epoch.

The results of this study are visualized in Figures \ref{fig:adj_only_corruption} and \ref{fig:adj_corruption}.

\begin{figure}
    \centering
    \includegraphics[width=0.7\linewidth]{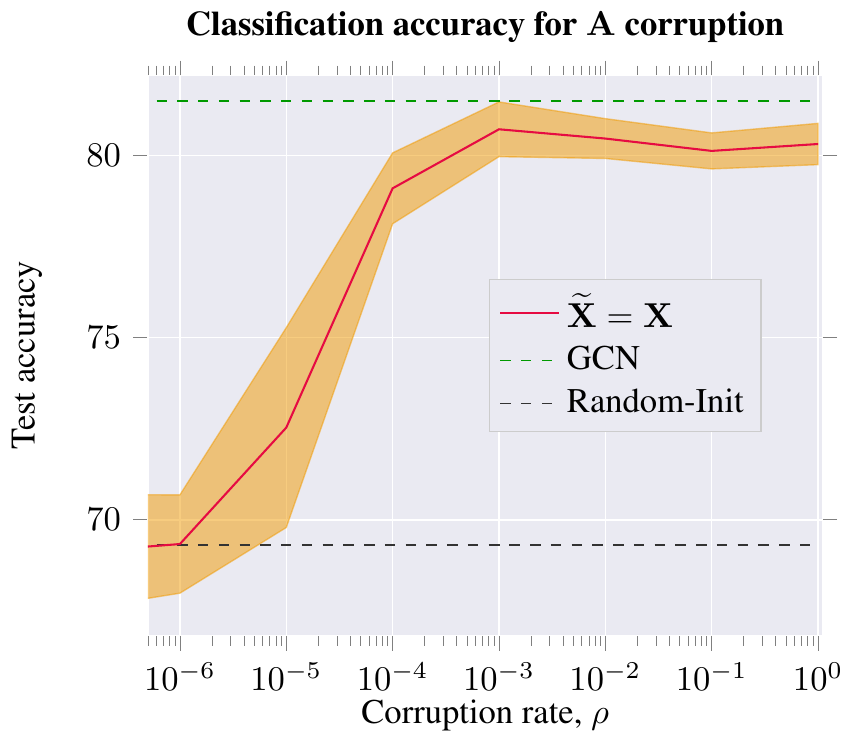}
    \caption{DGI also works under a corruption function that modifies only the adjacency matrix (${\bf \widetilde{A}} \neq {\bf A}$) on the Cora dataset.  The left range ($\rho \rightarrow 0$) corresponds to no modifications of the adjacency matrix---therein, performance approaches that of the randomly initialized DGI model. As $\rho$ increases, DGI produces more useful features, but ultimately fails to outperform the feature-shuffling corruption function. {\bf N.B.} log scale used for $\rho$.}
    \label{fig:adj_only_corruption}
\end{figure}

\begin{figure}
    \centering
    \includegraphics[width=0.7\linewidth]{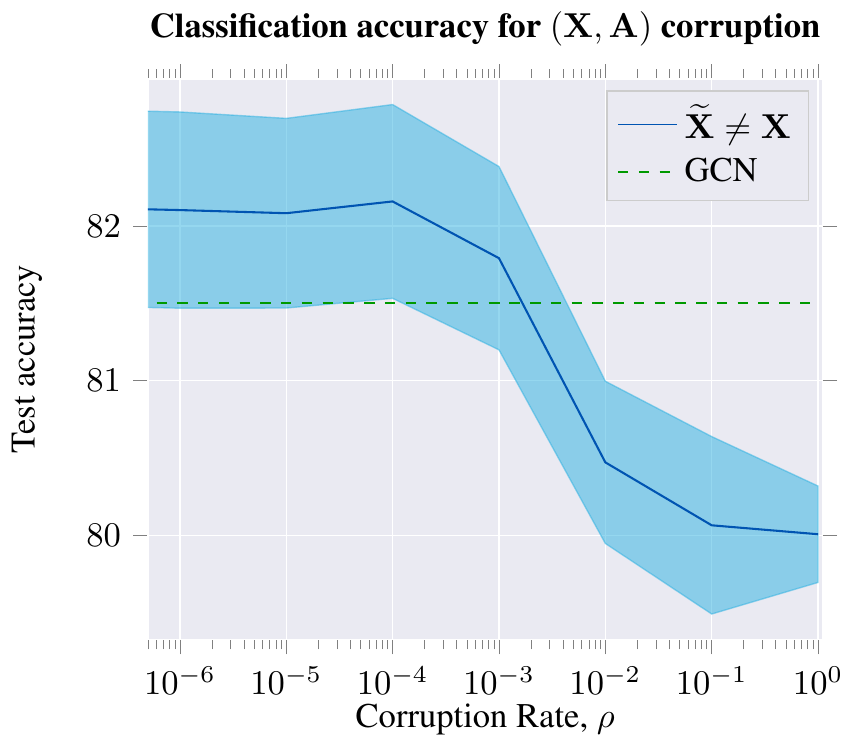}
    \caption{DGI is stable and robust under a corruption function that modifies \textit{both} the feature matrix ($\bf X \neq \bf \widetilde{X}$) and the adjacency matrix (${\bf \widetilde{A}} \neq {\bf A}$) on the Cora dataset. Corruption functions that preserve sparsity ($\rho\approx \frac{1}{N}$) perform the best. However, DGI still performs well even with large disruptions (where edges are added or removed with probabilities approaching 1). {\bf N.B.} log scale used for $\rho$.}
    \label{fig:adj_corruption}
\end{figure}

\end{document}